\documentclass{article}



\usepackage[preprint]{neurips_2022}



\usepackage[utf8]{inputenc} 
\usepackage[T1]{fontenc}    
\usepackage{hyperref}       
\usepackage{url}            
\usepackage{booktabs}       
\usepackage{amsfonts}       
\usepackage{nicefrac}       
\usepackage{microtype}      
\usepackage[usenames, dvipsnames]{xcolor}
\usepackage{graphicx}
\usepackage{amsmath}
\usepackage{amsthm}
\usepackage{amssymb}
\usepackage{booktabs}
\usepackage{algorithm}
\usepackage{algpseudocode}
\usepackage{dsfont}
\usepackage{caption}
\usepackage{subcaption}
\usepackage{multirow}
\usepackage{enumitem}
\usepackage{pifont}
\usepackage{adjustbox}
\usepackage{clipboard}
\usepackage{wrapfig}
\usepackage{soul}
\setlength{\textfloatsep}{7pt plus .5pt minus 1pt}
\setlength{\floatsep}{7pt plus .5pt minus 1pt}
\setlength{\intextsep}{7pt plus .5pt minus 1pt}
\setlength{\dbltextfloatsep}{7pt plus 0.5pt minus 1pt}
\setlength{\dblfloatsep}{7pt plus 0.5pt minus 1pt}

\setlength{\abovecaptionskip}{2pt}
\setlength{\belowcaptionskip}{2pt}

\setcitestyle{square,sort,comma,numbers}

\setcitestyle{number}

\title{Structured Pruning is All You Need \\ for Pruning CNNs at Initialization}

\author{
Yaohui Cai \quad Weizhe Hua \quad Hongzheng Chen \quad \\
\textbf{G. Edward Suh \quad Christopher De Sa \quad Zhiru Zhang}\\
Cornell University\\
{\tt\small \{yc2632, wh399, hc676, gs272, cmd353, zhiruz\}@cornell.edu} \\
}

\usepackage{xspace}

\iftrue

\newcommand{\todo}[1]{\textcolor{orange}{[\small TODO: ~#1]}}
\newcommand{\yc}[1]{\textcolor{orange}{[\small yc: ~#1]}}
\newcommand{\zz}[1]{\textcolor{blue}{[\small zz: ~#1]}}
\newcommand{\wh}[1]{\textcolor{red}{[\small wh: ~#1]}}
\newcommand{\hz}[1]{\textcolor{purple}{[\small hz: ~#1]}}

\else

\newcommand{\todo}[1]{\ignorespaces}
\newcommand{\yc}[1]{\ignorespaces}
\newcommand{\zz}[1]{\ignorespaces}
\newcommand{\wh}[1]{\ignorespaces}
\newcommand{\hz}[1]{\ignorespaces}
\fi

\def\cp {PreCrop\xspace}

\def\pai {PAI\xspace}
\def\goal {SynExp\xspace}
\def\ourthm {\goal Invariance Theorem\xspace}
\def\optflop {PreCrop-FLOPs\xspace}
\def\optparam {PreCrop-Params\xspace}

\def\reconf {PreConfig\xspace}

\newtheorem{theorem}{Theorem}[section]

\makeatletter
\makeatother
\usepackage{tikz}
\usetikzlibrary{arrows,patterns}


\usepackage{amsmath,amsfonts,bm}




\def\Figref#1{Figure~\ref{#1}}


\def\Secref#1{Section~\ref{#1}}


\def\eqref#1{equation~\ref{#1}}
\def\Eqref#1{Equation~\ref{#1}}








\def\1{\bm{1}}










\DeclareMathAlphabet{\mathsfit}{\encodingdefault}{\sfdefault}{m}{sl}
\SetMathAlphabet{\mathsfit}{bold}{\encodingdefault}{\sfdefault}{bx}{n}











\newcommand{\E}{\mathbb{E}}

\newcommand{\R}{\mathbb{R}}



\begin{document}

\maketitle

\begin{abstract}
Pruning-at-initialization (\pai) proposes to prune the individual weights of the CNN before training, thus avoiding expensive fine-tuning or retraining of the pruned model.
While \pai shows promising results in reducing model size, the pruned model still requires unstructured sparse matrix computation, making it difficult to achieve wall-clock speedups.
In this work, we show theoretically and empirically that the accuracy of CNN models pruned by \pai methods only depends on the fraction of remaining parameters in each layer (i.e., \textit{layer-wise density}), regardless of the granularity of pruning.
We formulate the \pai problem as a convex optimization of our newly proposed expectation-based proxy for model accuracy, which leads to finding the optimal layer-wise density of that specific model.
Based on our formulation, we further propose a structured and hardware-friendly \pai method, named \cp, to prune or reconfigure CNNs in the channel dimension.
Our empirical results show that \cp achieves a higher accuracy than existing \pai methods on several modern CNN architectures, including ResNet, MobileNetV2, and EfficientNet for both CIFAR-10 and ImageNet.
\cp achieves an accuracy improvement of up to $2.7\%$ over the state-of-the-art \pai algorithm when pruning MobileNetV2 on ImageNet.
\cp also improves the accuracy of EfficientNetB0 by $0.3\%$ on ImageNet with only $80\%$ of the parameters and the same FLOPs.
\end{abstract}

\section{Introduction}
\label{sec:intro}
Convolutional neural networks (CNNs) have achieved state-of-the-art accuracy in a wide range of machine learning (ML) applications. 
However, the massive computational and memory requirements of CNNs remain a major barrier to more widespread deployment on resource-limited edge and mobile devices.
This challenge has motivated a large body of research on CNN compression in order to
simplify the original model without significantly compromising accuracy. 

Weight pruning~\cite{lecun1990optimal, han2015deep, liu2018rethinking, frankle2018lottery, han2015learning} is one of the most extensively explored methods to reduce the computational and memory demands of CNNs.
Existing weight pruning approaches create a sparse CNN model by iteratively removing ineffective weights or activations and training the resulting sparse model.
Moreover, training-based pruning methods introduce additional hyperparameters, such as the learning rate for fine-tuning and the number of epochs before rewinding~\cite{renda2020comparing}, leading to a more complicated and less reproducible pruning process.
Among the different pruning techniques, training-based pruning usually enjoys the least accuracy degradation but at the cost of an expensive pruning procedure.

To minimize the cost of pruning, a new line of research proposes \emph{pruning-at-initialization (\pai)} \cite{lee2018snip, wang2020picking, tanaka2020pruning}, which identifies and prunes unimportant weights in a CNN right after initialization but before training.
As in training-based pruning methods, \pai evaluates the importance score of each individual weight and retains only a subset of weights by maximizing the sum of the importance scores of all remaining weights.
The compressed model is then trained using the same hyperparameters (e.g., weight decaying factor) for the same number of epochs as the baseline model.
Thus, the pruning and training of CNNs are cleanly decoupled, which greatly reduces the complexity of obtaining a pruned CNN model. 
Currently, SynFlow~\cite{tanaka2020pruning} is considered the state-of-the-art \pai technique --
it further eliminates the need for data in pruning as required in \cite{lee2018snip, wang2020picking} and achieves higher accuracy with the same number of parameters.

However, existing \pai methods mostly focus on fine-grained weight pruning, which removes individual weights from the CNN model without preserving any structure.
As a result, inference and training of the pruned model require sparse matrix computation, which is challenging to accelerate on commercially-available ML hardware such as GPUs and TPUs~\cite{google16tpu} that are optimized for dense computation.
According to a recent study~\cite{gale20sparseGPUkernel}, even with the NVIDIA cuSPARSE library, one can only achieve a meaningful speedup for sparse matrix multiplications on GPUs when the sparsity is over 98\%.
In practice, it is difficult for modern CNNs to shrink by more than 50$\times$ without a drastic degradation in accuracy~\cite{blalock2020state}.
Structural pruning patterns (e.g., pruning weights for the entire output channel) are needed to avoid irregularly sparse storage and computation, thus providing practical memory and computational savings.
Moreover, recent studies~\cite{su2020sanity, frankle2020pruning} also observe that randomly shuffling the binary weight mask of each layer or reinitializing all remaining weights does not affect the accuracy of the model compressed using existing \pai methods.
In this work, we first review the limitations of all previous \pai methods, 

Based on the observations, we hypothesize that existing \pai methods 
are only effective in determining the fraction of remaining weights in each layer, but fail to find a significant subset of weights.
We propose to use the expectation of the sum of importance scores of all weights, rather than the sum, as a proxy for the accuracy of the model, thus treating all weights in the same layer with  equal importance.
With our new proxy for accuracy named \textbf{SynExp} (Synaptic Expectation), we can formulate \pai as a convex optimization problem that directly solves the optimal fraction of remaining weights per layer (i.e., \textit{layer density}) subject to certain model size and/or FLOPs constraints.
We also prove a theorem that SynExp does not change the same as long as the layer-wise density remains the same, regardless of the granularity of pruning.
The theorem opens an important opportunity that coarse-grained \pai methods can achieve similar accuracy as their existing fine-grained counterparts such as SynFlow.
We demonstrate the efficacy of the proposed proxy through extensive empirical experiments.

We further propose \textbf{\cp}, a structured \pai that prunes CNN models at the channel level.
\cp can effectively reduce the model size and computational cost without loss of accuracy compared to fine-grained \pai methods, and more importantly, provide a wall-clock speedup on commodity hardware.
By allowing each layer to have more parameters than the baseline network, we 
are also able to \textit{reconfigure} the width dimension of the network with almost zero cost, 
which is termed \textbf{\reconf}.
Our empirical results show that the model after \reconf can achieve higher accuracy with fewer parameters and FLOPs than the baseline for a variety of modern CNNs.

We summarize our contributions as follows:
\begin{itemize}
    \item We propose to use the expectation of the sum of importance scores of all weights as a proxy for accuracy and formulate \pai as a \goal optimization problem constrained by the model size and/or FLOPs.
    We also prove that the accuracy of the CNN model pruned by solving the constrained optimization is independent of the pruning granularity.
    
    \item We introduce \cp 
    to prune CNNs at the channel level based on the proposed \goal optimization.
    Our empirical study demonstrates that models pruned using \cp achieve similar or better accuracy compared to the state-of-the-art unstructured PAI approaches while preserving regularity.
    Compared to SynFlow, \cp achieves 2.7\% and 0.9\% higher accuracy on MobileNetV2 and EfficientNet on ImageNet with fewer parameters and FLOPs.
    \item We show that \reconf can be used to optimize the width of each layer in the network with almost zero cost (e.g., the search can be done within one second on a CPU).
    Compared to the original model, \reconf achieves 0.3\% accuracy improvement with 20\% fewer parameters and the same FLOPs for EfficientNet and MobileNetV2 on ImageNet.
    
\end{itemize}
\section{Related Work}
\textbf{Model Compression in General}
can reduce the computational cost of large networks to ease their deployment in resource-constrained devices.
Besides pruning, quantization~\cite{dong2019hawq, zhou2016dorefa, jacob2018quantization}, NAS~\cite{zoph2016neural, tan2019efficientnet},  
and distillation \cite{hinton2015distilling, yin2020dreaming} are also commonly used to improve the model efficiency.

\textbf{Training-Based Pruning} uses various heuristic criteria to prune unimportant weights. They typically require an iterative training-prune-retrain process where the pruning stage is intertwined with the training stage, which may increase the overall training cost by several folds. 
Because pruning aims to reduce parameters, 
the FLOPs reduction is usually less significant~\cite{liu2017learning, frankle2018lottery}.

Existing training-based pruning methods can be either unstructured~\cite{han2015deep,lecun1990optimal} or structured~\cite{he2017channel, luo2017thinet}, depending on  the granularity and regularity of the pruning scheme. Training-based unstructured pruning usually provides a better accuracy-size trade-off while structured pruning can achieve a more practical speedup and compression without  special support of custom hardware. 

\textbf{(Unstructured) Pruning-at-Initialization (\pai)}~\cite{lee2018snip, wang2020picking, tanaka2020pruning} 
provides a promising approach to mitigating the high cost of training-based pruning. 

They can identify and prune unimportant weights right after initialization and before the training starts.
Related to these efforts, authors of \cite{frankle2020pruning} and \cite{su2020sanity} independently find that for all existing \pai methods, 
randomly shuffling the weight mask within a layer or reinitializing all the weights in the network does not cause any accuracy degradation.

\textbf{Neural Architecture Search (NAS)}
~\cite{zoph2016neural,wan2020fbnetv2} 
automatically searches over a large set of candidate models to achieve the optimal accuracy-computation trade-off.
The search space of NAS usually includes width, depth, resolution, and choice of building blocks.
However, existing approaches can only search in a small subset of the possible channel width configurations due to the cost.
The cost for NAS is also orders of magnitude higher than training a model.
Some NAS algorithms~\cite{abdelfattah2021zero, zhou2020econas} use a cheap proxy instead of training the whole network, 
but an expansive reinforcement learning~\cite{zoph2016neural} or evolutionary algorithm~\cite{real2019regularized} is still used to predict a good network.

\section{Pruning-at-initialization via \goal Optimization}

In this section, we first discuss the preliminaries and limitations of existing \pai methods.
We propose a new proxy for the accuracy of the compressed model to overcome these limitations.
With the proposed proxy, we formulate the \pai problem into a convex optimization problem.

\subsection{Preliminaries and Limitations of \pai}
\label{sec:background}
\textbf{Preliminaries.} 
\pai aims to prune neural networks after initialization but before training to avoid the time-consuming \emph{training-pruning-retraining} process.
Prior to training, \pai uses the gradients with respect to the weights to estimate the importance of individual weights, which requires forward and backward propagations.
Weights ($W$) with smaller importance scores are pruned by setting the corresponding entries in the binary weight mask ($M$) to zero.
Existing \pai methods, such as SNIP~\cite{lee2018snip}, GraSP~\cite{wang2020picking}, and SynFlow~\cite{tanaka2020pruning} mainly explore different methods to estimate the importance of individual weights.
Single-shot \pai algorithms, such as SNIP and GraSP,
prune the model to the desired sparsity in a single pass.
Alternatively, SynFlow, which represents the state-of-the-art \pai algorithm, repeats the process of pruning a small fraction of weights and re-evaluating the importance scores until the desired pruning rate is reached.
Through the iterative process, the importance of weights can be estimated more accurately.

Specifically, the importance score used in SynFlow can be formulated as:
\begin{equation}
\label{eq:synflow}
    \mathcal{S}_{\text{SF}}(W^{l}_{ij}) = 
    \left[\mathds{1}^T\prod_{k=l+1}^{N}\left|W^{k}\odot M^{k}\right|\right]_i 
    \left|W^{l}_{ij} M^{l}_{ij}\right| 
    \left[\prod_{k=1}^{l-1} \left|W^{k}\odot M^{k}\right|\mathds{1}\right]_j,
    \end{equation}
where $N$ is the number of layers,
$W^{l}$ and $M^{l}$ are the weight and weight mask of the $l$-th layer,
$\mathcal{S}_{\text{SF}}(W^{l}_{ij})$ is the SynFlow score for a single weight $W^{l}_{ij}$,
$\odot$ denotes the Hadamard product,
$|\cdot|$ is element-wise absolute operation,
and $\mathds{1}$ is an all-one vector.
It is worth noting that none of the data or labels is used to compute the importance score, thus making SynFlow a data-agnostic algorithm.

\textbf{Limitations.}
As pointed out by \cite{su2020sanity, frankle2020pruning}, randomly shuffling the weight mask $W$ of each layer or reinitializing all the weights $M$ does not affect the final accuracy of models compressed using existing \pai methods.
In addition, they show that given the same layer-wise density (i.e., the fraction of remaining weights in each layer), the pruned models will have similar accuracy.
The observations suggest that even though the existing \pai algorithms try to identify less important weights,
which weights to prune is not important for accuracy.

All previous \pai methods use the \textit{sum} of importance scores of the remaining weights as a proxy for model accuracy, which is identical to the training-based pruning~\cite{han2015deep,lecun1990optimal}.
\pai obtains the binary weight mask by maximizing the proxy as follows
\begin{equation}
\label{eq:trad}
\mbox{maximize } \sum_{l=1}^N S^{l} \cdot M^{l} \quad \mbox{ over } M \quad \mbox{ subject to } \sum_{l=1}^N \|M^{l}\|_0 \le B_{\text{params}} \ , 
\end{equation}
where $N$ is the number of layers in the network, $S^{l}$ is the score matrix in the $l$-th layer, $M^{l}$ is the binary weight mask in the $l$-th layer, $\|\cdot\|_0$ is the number of nonzero entries in a matrix, and $B_{\text{params}}$ is a pre-defined model size constraint.

Regardless of the specific \pai importance score chosen, a subset of weights is determined to be more important than the other weights, which contradicts the observation that random shuffling does not affect accuracy.
Instead, we propose a new accuracy proxy for \pai to address the limitations.

\subsection{\ourthm}
\label{sec:theorem}
Inspired by the observations of previous \pai methods, we conjecture that a proxy for the accuracy of the model pruned using a \pai method should satisfy the following two properties:
\begin{itemize}
    \item [1.] The pruning decision (i.e., weight mask $M$) can be made \textit{before} the model is initialized. 
    \item [2.] Maximization of the proxy should result in optimal \textit{layer-wise density}, not pruning decisions for individual weights.
    
\end{itemize}

For random pruning before initialization, given a fixed density $p_l$ for each layer, the weight matrix $W_l$ and the binary mask matrix $M_l$ of that layer can be considered two random variables.
The binary weight mask is applied to the weight matrix element-wisely as $W_l \odot M_l$, where $\odot$ represents Hadamard product.
The weight matrix of layer $l$ (i.e., $W^l$) contains $\alpha_l$ parameters.
Each individual weight in layer $l$ is sampled independently from a given distribution $D^l$.
Suppose $A^l=\{M, M_i \in \{0, 1\}\ \forall 1\le i\le \alpha_l, \sum_i M_i = p_l \times \alpha_l\}$ is the set of all possible binary matrices with the same shape as the weight matrix $W^l$ that satisfy the layer-wise density ($p_l$) constraint.
Then, the random weight mask $M^l$ for layer $l$ is sampled uniformly from $A^l$.

Let $M=\{M^l, \forall 1\le l\le N\}$ and $W=\{W^l, \forall 1\le l\le N\}$ be the weights and masks of all $N$ layers in the network, respectively.
The observations in Section~\ref{sec:background} indicate that any instantiation of the two random variables $M$ and $W$ results in similar final accuracy of the pruned model.
However, the different instantiations do change the proxy value for the model accuracy in existing \pai methods.
For example, the SynFlow score in \Eqref{eq:trad} changes under different instances of $M$ and $W$.
Therefore, we propose a new proxy that is invariant to the instantiations of $M$ and $W$ for the model accuracy in the context of \pai~--- 
the \textit{expectation} of the sum of the importance scores of all unpruned (i.e., remaining) weights.
The proposed proxy can be formulated as follows:
\begin{equation}
\begin{aligned}
\label{eq:expectation}
\mbox{maximize} \mathop{\mathbb{E}}_{M, W}[\mathcal{S}] 
=\mathop{\mathbb{E}}_{M, W}\left[\sum_{l=1}^N S^{l} \cdot M^{l}\right] \quad 
\mbox{ over } p_l \quad \mbox{ subject to }
\sum_{l=1}^N \alpha_l \cdot p_l \le B_{\text{params}} \ , 
\end{aligned}
\end{equation}
where $p_l$ represents layer-wise density of layer $l$,
$\alpha_l$ is the number of parameters in layer $l$,
$\mathop{\mathbb{E}}_{M, W}[\mathcal{S}]$ stands for the expectation of the importance score $\mathcal{S}$ over random weight $W$, and binary random mask $M$.
In this new formulation, the layer-wise density $p_l$ is optimized to maximize the proposed proxy for model accuracy.

In order to evaluate the expectation before weight initialization, we adopt the importance metric proposed by SynFlow, i.e., replacing $\mathcal{S}$ in \Eqref{eq:expectation} with $\mathcal{S}_\text{SF}$ in \Eqref{eq:synflow}.
As a result, we can compute the expectation analytically without forward or backward propagations.
This new expectation-based proxy is referred to as \textit{\goal} (Synaptic Expectation).
We further prove \goal is invariant to the granularity of \pai in the 
\ourthm. 
The detailed proof can be found in Appendix~\ref{app:proof}.

\begin{theorem}
Given a specific CNN architecture, the \goal ($\E_{[M, W]} [\mathcal{S}_\text{SF}]$) of any randomly compressed model with the same layer-wise density $p_l$ is a constant, independent of the pruning granularity.
The constant \goal equals to:
\begin{equation}
\label{eq:goal}
\mathop{\E}_{M, W}[\mathcal{S}_\text{SF}] 
= N C_{N+1}\prod_{l=1}^N (p_l C_{l} \cdot \E_{x\sim \mathcal{D}}[|x|]) \ ,
\end{equation}
where $N$ is the number of layers in the network,
$\E_{x\sim \mathcal{D}}[|x|]$ is the expectation of magnitude of distribution $\mathcal{D}$,
$C_l$ is the input channel size of layer $l$ and is also the output channel size of $l-1$,
and $p_l=\frac{1}{\alpha_l} \|M_l\|_0$ is the layer-wise density.
\end{theorem}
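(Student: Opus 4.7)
The plan is to exploit two structural facts about SynFlow: the positive $N$-homogeneity of the scalar $R(W,M) := \mathds{1}^{T} |W^{N}\odot M^{N}|\cdots|W^{1}\odot M^{1}|\mathds{1}$, and the cross-layer independence of the random $W^{l}$ and $M^{l}$. The proof is then essentially a calculation.

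First, I would collapse the per-weight scores into a single expression using Euler's identity. Because $M^{l}_{ij}\in\{0,1\}$ gives $|W^{l}_{ij}M^{l}_{ij}|\cdot M^{l}_{ij} = |W^{l}_{ij}|\cdot M^{l}_{ij}$, summing $S^{l}_{ij}M^{l}_{ij}$ over $i,j$ reassembles precisely $R(W,M)$: the layer-$l$ factor simply occupies its natural slot in the fixed matrix product, so the result is independent of $l$. Summing over $l=1,\ldots,N$ therefore yields $\mathcal{S}_{\text{SF}} = N\,R(W,M)$, which is exactly Euler's identity for the $N$-homogeneous function $R$.

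Second, I would push the expectation through the matrix product. Expanding $R(W,M)$ as a sum over index paths $(i_{0},\ldots,i_{N})$ of products $\prod_{k=1}^{N}|W^{k}_{i_{k},i_{k-1}}|M^{k}_{i_{k},i_{k-1}}$, each term contains exactly one factor from each layer. By the assumed cross-layer independence and independence of $W^{l}$ from $M^{l}$, the expectation factorises to $\mathbb{E}[R] = \mathds{1}^{T}\prod_{k=N}^{1}\mathbb{E}[|W^{k}\odot M^{k}|]\,\mathds{1}$. Crucially, this uses only the one-dimensional marginal $\mathbb{E}[M^{l}_{ij}]$ — entry-wise correlations inside a single $M^{l}$ never appear.

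Third, I would pin down the per-entry marginal and telescope. For any sampling scheme consistent with layer density $p_{l}$ — unstructured, channel-wise, block-wise, or anything in between — exchangeability of indices forces $\mathbb{E}[M^{l}_{ij}]=p_{l}$ uniformly, so $\mathbb{E}[|W^{l}\odot M^{l}|]$ is the $C_{l+1}\times C_{l}$ matrix with every entry equal to $c_{l}:=p_{l}\,\mathbb{E}_{x\sim\mathcal{D}}[|x|]$. Multiplying constant matrices telescopes: each internal contraction contributes its inner dimension as a scalar, giving $\prod_{l=2}^{N}C_{l}$, while the bracketing $\mathds{1}^{T}(\cdot)\mathds{1}$ supplies $C_{N+1}\cdot C_{1}$. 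After the Euler prefactor $N$, this collects to $N\,C_{N+1}\prod_{l=1}^{N}(p_{l}C_{l}\,\mathbb{E}_{x\sim\mathcal{D}}[|x|])$, exactly as claimed.

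The main obstacle is the granularity argument in Step 2: different granularities generate genuinely different joint laws on $M^{l}$ (nearly independent entries under unstructured pruning, strongly correlated entries under channel pruning), but only the single-entry marginal $\mathbb{E}[M^{l}_{ij}]$ ever enters because every index path touches each layer exactly once. Isolating this ``each layer appears once per term'' structure — which is what decouples the answer from granularity — is the conceptual crux. A minor bookkeeping hazard is keeping the matrix-product order consistent with SynFlow's convention so that the inner dimensions telescope correctly to $\prod_{l=1}^{N+1}C_{l}$ and the Euler prefactor $N$ is not miscounted.
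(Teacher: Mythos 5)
Your proposal is correct, and it reaches the formula by the same underlying mechanism as the paper (cross-layer independence plus uniform per-entry mask marginals $\E[M^l_{ij}]=p_l$), but the organization is genuinely different and, on the key point, tighter. The paper never invokes Euler's identity: it decomposes $\E[\mathcal{S}_\text{SF}]=\sum_l \E[\mathcal{S}_\text{SF}]^{[l]}$, introduces row/column/layer ``expectation values'' $E^l_{(i,*)}, E^l_{(*,j)}, E^l_{(*,*)}$, asserts each equals the constant $p_l\E_{|\mathcal{D}^l|}$, substitutes them into the per-weight score, and sums $N$ identical layer totals --- the factor $N$ emerging because each layer's score sum is secretly the same scalar $R(W,M)$ that you identify explicitly and deterministically via $N$-homogeneity. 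What your route buys is rigor exactly where the theorem's content lies: the paper's quantities $E^l_{(i,*)}$ are random variables (empirical row averages), not constants, and for structured masks with strong intra-layer correlations (e.g., channel pruning) they are genuinely non-degenerate, so equating them to $p_l\E_{|\mathcal{D}^l|}$ is the step one must worry about; your path expansion of $R$ shows that every monomial touches each layer exactly once, hence only the one-dimensional marginals $\E[M^l_{ij}]$ and $\E[|W^l_{ij}|]$ ever appear, which is the cleanest possible justification of granularity-independence. What the paper's route buys is a more direct bookkeeping tie to the per-weight score formula in its Equation (1), at the cost of the conflation just described. Two small caveats for your write-up: state the exchangeability of the mask distribution as an explicit assumption (uniform sampling among masks of the given granularity and density, matching the paper's setup for $A^l$), and note, as the paper does in one line, that the argument as written is for fully connected layers, with CNN layers differing only by a constant per-layer kernel-size factor, so the conclusion carries over.
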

In Equation~\ref{eq:goal}, $N$ and $C^l$ are all hyperparameters of the CNN architecture and can be considered constants.
$\E_{|\mathcal{D}^l|}$ is also a constant under a particular distribution $\mathcal{D}^l$.
The layer-wise density $p_l$ is the only variable that needs to be solved in the equation.
Thus, \goal satisfies both of the aforementioned properties: 1) pruning is done prior to the weight initialization; 2) the layer-wise density can be directly optimized.
Furthermore, Theorem 1 also shows that the granularity of pruning has no impact on the proposed \goal metric.
In other words, the CNN models compressed using unstructured and structured pruning methods will have similar accuracy.

\begin{wrapfigure}{R}{0.5\textwidth}
\centering
\vspace{-8pt}
\begin{subfigure}[H]{0.5\textwidth}
    \centering
    \includegraphics[width=\linewidth]{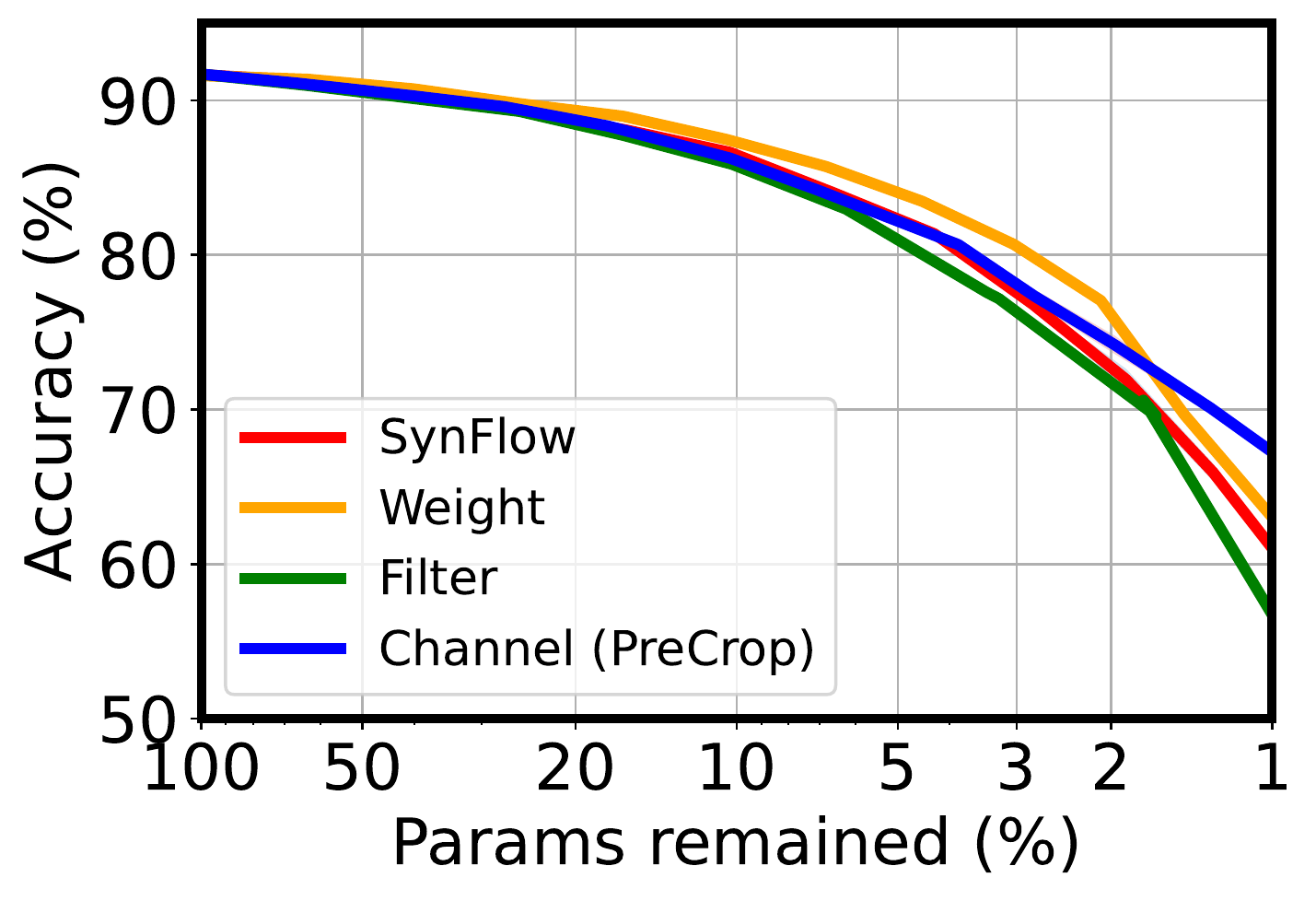}
\end{subfigure}
\hfill
\centering
\vspace{-5pt}
\caption{
Comparison of the performance using different pruning granularities on ResNet20 using CIFAR-10.
}
\vspace{-5pt}
\label{fig:thm_comp}
\end{wrapfigure}
We empirically verify Theorem 1 by randomly pruning each layer of a CNN with different pruning granularity but the same layer-wise density ($p_l$).
In this empirical study, we perform random pruning with three different granularities (i.e., weight, filter, and channel) to achieve the desired layer-wise density obtained from solving Equation~\ref{eq:expectation}.
For weight and filter pruning,
randomly pruning each layer to match the layer-wise density $p_l$ occasionally detaches some weights from the network, especially when the density is low.
The detached weights do not contribute to the prediction but are counted as remaining parameters.
Thus, we remove the detached weights for a fair comparison following~\cite{vysogorets2021connectivity}.
For channel pruning, 
it is non-trivial to achieve the given layer-wise density while satisfying the constraint that the output channel size of the previous layer should be equal to the input channel size of the next layer.
Therefore, we use \cp proposed in \Secref{sec:prune}.
As shown in \Figref{fig:thm_comp}, 
random pruning with different granularity can obtain similar accuracy compared to SynFlow,
as long as the layer-wise density remains the same.
The empirical results are consistent with Theorem 1 and also demonstrate the efficacy of the proposed \goal metric.
We include more empirical results for different CNN architectures and different importance scores in Appendix~\ref{app:val}.

\subsection{Optimizing \goal}
As discussed in \Secref{sec:theorem}, only the layer-wise density matters for our proposed \goal approach.
Here, we show how to obtain the layer-wise density in \Eqref{eq:expectation} that maximizes \goal under model size and/or FLOPs constraints.

\subsubsection{Optimizing \goal with Parameter Count Constraint}
\label{sec:opt_params}
Given that the goal of \pai is to reduce the size of the model, we need to add an additional constraint on the total number of parameters $B_\text{params}$ (i.e., parameter constraint), where $B_\text{params}$ is typically greater than zero and less than the number of parameters in the original network.
Since layer-wise density $p_l$ is the only variable in \Eqref{eq:expectation}, we can simplify the equation by removing all other constant terms, as follows:
\begin{align}
\label{eq:opt_params}
\begin{split}
\mbox{maximize } \sum_{l=1}^N \log p_l \quad \mbox{ over } p_l \quad
\text{ subject to } \
& \sum_{l=1}^N \alpha_l \cdot p_l \le B_{\text{params}} \ , \\
& 0<p_l \le 1, \forall 1\le l \le N \ , 
\end{split}
\end{align}
where $\alpha_l$ is the number of parameters in layer $l$.

\Eqref{eq:opt_params} is a convex optimization problem that can be solved analytically\textsuperscript{\ref{ft1}}.
We compare the layer-wise density derived from solving Equation~\ref{eq:opt_params} with the density obtained using SynFlow.
As shown in Figure~\ref{fig:spar_comp}, the layer-wise density obtained by both approaches are nearly identical, where our new formulation gets rid of the iterative re-evaluation of SynFlow scores and the pruning process in SynFlow.
It is also worth noting that the proposed method finds the optimal layer-wise density even before the network is initialized.
\begin{figure}[t]
\vspace{-15pt}
\centering
\begin{subfigure}[H]{\textwidth}
    \centering
    \includegraphics[width=0.85\linewidth]{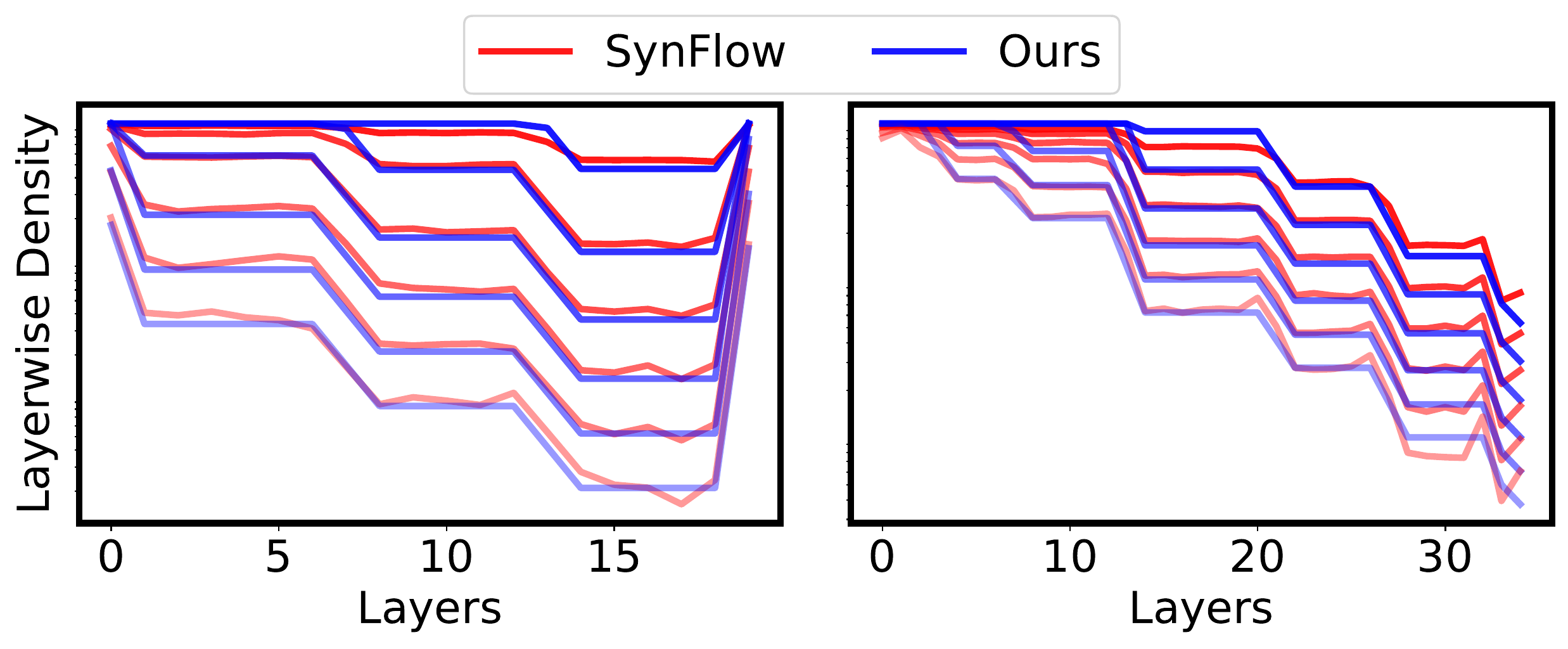}
\end{subfigure}
\vspace{-.6cm}\\
\subfloat[ResNet-20]{\hspace{.42\linewidth}}
\subfloat[MobileNetV2]{\hspace{.4\linewidth}}
\caption{
Comparison of the layer-wise densities obtained by \goal optimization with parameter count constraint and SynFlow.
Higher transparency means that the problem is constrained by a smaller parameter count.
}
\vspace{-5pt}
\label{fig:spar_comp}
\end{figure}

\subsubsection{Optimizing \goal with Parameter Count and FLOPs Constraints}
As discussed in \Secref{sec:opt_params},
we can formulate \pai as a simple convex optimization problem with a constraint on the model size.
However, the number of parameters does not always reflect the performance (e.g., throughput) of the CNN model.
In many cases, CNN models are compute-bound on many commodity hardwares~\cite{google16tpu, harish2007accelerating}. 
Therefore, we propose to also introduce a FLOPs constraint in our formulation.

The FLOPs saved in existing \pai algorithms specified in \Eqref{eq:trad} come from pruning the weights in the CNN model.
In other words, given a parameter constraint, the FLOP count of the pruned model is also determined.
It is not straightforward to introduce a FLOP constraint for each layer in the model, because the correspondence between the number of parameters and FLOPs varies across different layers.
Therefore, none of the existing \pai methods can be directly used to bound the FLOPs of CNN models.
Since the weights in the same layer are associated with the same FLOP count, we can directly incorporate the constraint on FLOPs $B_{\text{FLOPs}}$ (i.e., FLOPs constraint) into the convex optimization problem as follows:
\begin{align}
\label{eq:opt_both}
\begin{split}
\mbox{maximize }\sum_{l=1}^N \log p_l \quad \mbox{ over } p_l  \quad
\text{ subject to } \
& \sum_{l=1}^N \alpha_l \cdot p_l \le B_{\text{params}}, \sum_{l=1}^N \beta_l \cdot p_l \le B_{\text{FLOPs}} \\
& 0<p_l \le 1, \forall 1\le l \le N,
\end{split}
\end{align}
where $\beta_l$ in the number of FLOPs in the $l^{th}$ layer.

\begin{wrapfigure}{R}{0.45\textwidth}
\centering
\vspace{-15pt}
\begin{subfigure}[H]{0.45\textwidth}
    \centering
    \includegraphics[width=\linewidth]{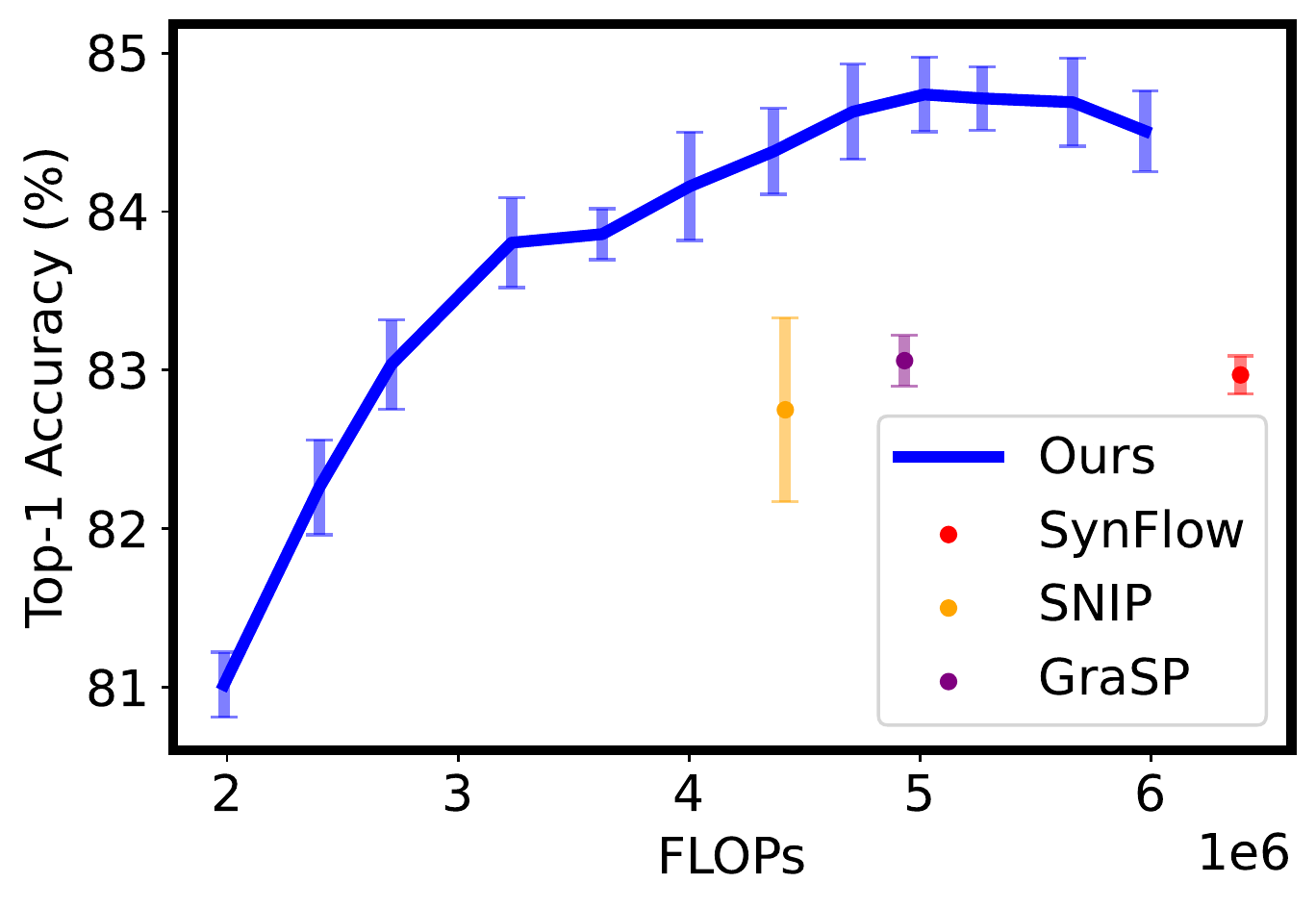}

\end{subfigure}
\hfill
\centering
\caption{
Comparison of our method with other \pai methods --- \small{we repeat the experiment using ResNet-20 on CIFAR-10 five times and report the mean and variance (error bar) of the accuracy. All the models in the figures have $1.5 \times 10^4$ parameters.}
}
\vspace{-15pt}
\label{fig:pareto}
\end{wrapfigure}
Since the additional FLOPs constraint is linear, 
the optimization problem in Equation~\ref{eq:opt_both} remains convex and has an analytical solution\footnote{\label{ft1}We include analytical solutions for \Eqref{eq:opt_params} and \Eqref{eq:opt_both} in Appendix~\ref{app:solve} for completeness.}.
By solving \goal optimization with a fixed $B_\text{params}$ but different $B_\text{FLOPs}$, we can obtain the layer-wise density for various models that have the same number of parameters but different FLOPs.
Then, we perform random weight pruning on the CNN model to achieve the desired layer-wise density.
We compare the proposed \goal optimization (denoted as Ours) with other popular \pai methods.  
As depicted in \Figref{fig:pareto}, given a fixed model size ($1.5\times 10^4$ in the figure), our method can be used to generate a Pareto Frontier that spans the spectrum of FLOPs, while other methods can only have a fixed FLOPs.
Our method dominates all other methods in terms of both accuracy and FLOPs reduction.

\section{Structured Pruning-at-Initialization}

The \ourthm shows that pruning granularity of \pai methods should not affect the accuracy of the pruned model.
Channel pruning, which prunes the weights of the CNN at the output channel granularity, is considered the most coarse-grained and hardware-friendly pruning technique, 
Therefore, applying the proposed \pai method for channel pruning can avoid both complicated retraining/re-tuning procedures and irregular computations.
In this section, we propose a structured \pai method for channel pruning, named \cp, to prune CNNs in the channel dimension.
In addition, we propose a variety of \cp with relaxed density constraints to reconfigure the width of each layer in the CNN model, which is called PreConfig.

\subsection{\cp}
\begin{figure}[t]
\vspace{-15pt}
\centering
\begin{subfigure}[H]{\textwidth}
    \centering
    \includegraphics[width=0.8\linewidth]{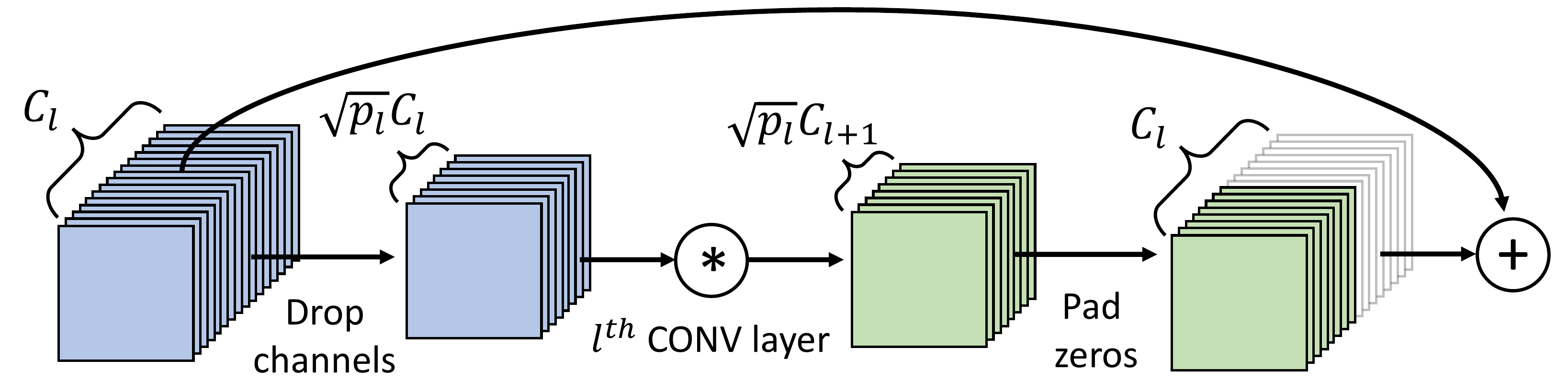}
\end{subfigure}

\caption{
Illustration of \cp for layers with residual connections --- \small{$C_{l}$ and $C_{l+1}$ represent the number of input channels of layer $l$ and $l+1$, respectively. $p_l$ represents the density of layer $l$}.
}
\label{fig:precrop}
\end{figure}

\label{sec:cp}
Applying the proposed \pai method to channel pruning requires a two-step procedure.
First, the layer-wise density $p_l$ is obtained by solving the optimization problem shown in Equation~\ref{eq:opt_params} or \ref{eq:opt_both}.
Second, we need to decide how many output channels of each layer should be pruned to satisfy the layer-wise density.
However, it is not straightforward to compress each layer to match a given layer-wise density due to the additional constraint that the number of output channels of the current layer must match the number of input channels of the next layer.

We introduce \cp, 
which compresses each layer to meet the desired layer-wise density.
Let $C_l$ and $C_{l+1}$ be the number of input channels of layer $l$ and $l+1$, respectively.
$C_{l+1}$ also means the number of output channels of layer $l$.
For layers with no residual connections, 
the number of output channels of layer $l$ is reduced to 
$\left\lfloor\sqrt{p_l} \cdot C_{l+1}\right\rfloor$.
The number of input channels of layer $l+1$ needs to match the number of output channels of layer $l$, which is also reduced to $\left\lfloor\sqrt{p_{l}} \cdot C_{l+1}\right\rfloor$.
Therefore, the actual density of layer $l$ after \cp is $\sqrt{p_{l-1} \cdot p_l}$ instead of $p_l$.
We empirically find that $\sqrt{p_{l-1}\cdot p_l}$ is close enough to $p_l$ because the neighboring layers have similar layer-wise densities.
Alternatively, it is possible to obtain the exact layer-wise density $p$ by only reducing the number of input or output channels of a layer.
However, this approach leads to a significant drop in accuracy,
because the number of the input and output channels can change dramatically (e.g., $p_lC_l\ll C_{l+1}$ or $C_l \gg p_l C_{l+1}$).
This causes the shape of the feature map changes dramatically in adjacent layers, resulting in information loss.

For layers with residual connections, \Figref{fig:precrop} depicts an approach to circumvent the constraint on the number of channels of adjacent layers.
We can reduce the number of input and output channels of layer $l$ from $C_l$ and $C_{l+1}$ to $\sqrt{p_l}C_l$ and $\sqrt{p_l}C_{l+1}$, respectively.
In this way, the density of each layer can match the given layer-wise density obtained from the proposed \pai method.
Since the output of layer $l$ needs to be added element-wisely with the original input to layer $l$, the output of layer $l$ is padded with zero-valued channels to match the shape of the original input.
In our implementation, 
we simply add the output of layer $l$ to the first $\sqrt{p_l}C_{l+1}$ channels of the original input to layer $l$, 
thus requiring no extra memory or computation for zero padding.
\cp eliminates the requirement for sparse computation in existing \pai methods and thus can be used to accelerate both training and inference of the pruned models.

\subsection{\reconf: \cp with Relaxed Density Constraint}
\label{sec:reconf}

\cp uses the layer-wise density obtained from solving the convex optimization problem, which is always less than $1$ following the common setting for pruning (i.e., $p_l\le 1$).
However, this constraint on layer-wise density is not necessary for our method since we can increase the number of channels (i.e., expand the width of the layer) before initialization.
By solving the problem in \Eqref{eq:opt_both} without the constraint $p_l \le 1$,
we can expand the layers with a density greater than 1 ($p_l>1$) and prune the layers with a density less than 1 ($p_l<1$).
We call this variant of \cp as \reconf (PreCrop-Reconfigure).
If we set $B_{\text{params}}$ and $B_{\text{FLOPs}}$ to be the same as the original network, we can essentially reconfigure the width of each layer of a given network architecture under certain constraints on model size and FLOPs.

The width of each layer in a CNN is usually designed manually, which often relies on extensive experience and intuition.
Using \reconf, we can determine the width of each layer in the network to achieve a better cost-accuracy tradeoff.
\reconf can also be used as an or a part of an ultra-fast NAS.
Compared to NAS, which typically searches on the width, depth, resolution, and choice of building blocks, \reconf only changes the width.
Nonetheless, \reconf only requires a minimum amount of time and computation compared to NAS methods; it only needs to solve a relatively small convex optimization problem, which can be solved in a second on a CPU.
\label{sec:prune}
\section{Evaluation}
In this section, we empirically evaluate \cp and \reconf. 
We first demonstrate the effectiveness of \cp by comparing it with SynFlow.
We then use \reconf to tune the width of each layer and compare the accuracy of the model after \reconf with the original model.
We perform experiments using various modern CNN models,
including ResNet~\cite{he2016deep}, MobileNetV2~\cite{sandler2018mobilenetv2}, and EfficientNet~\cite{tan2019efficientnet}, on both CIFAR-10 and ImageNet.
We set all hyperparameters used to train the models pruned by different PAI algorithms to be the same.
See Appendix~\ref{app:exp} for detailed hyperparameter settings.

\subsection{Evaluation of \cp}

\begin{figure}[t]
\centering
\begin{subfigure}[H]{\textwidth}
    \includegraphics[width=\linewidth]{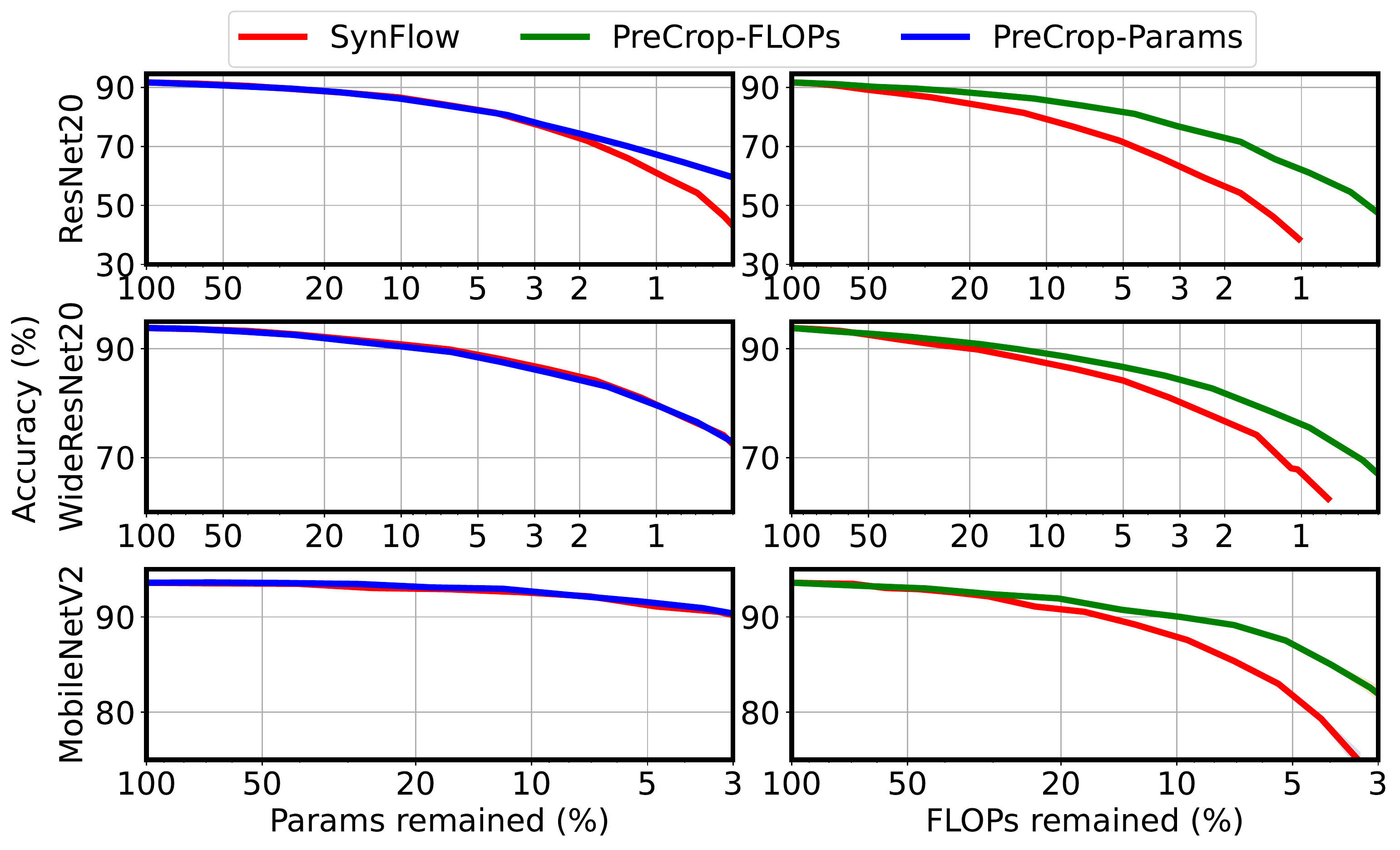}
    \vspace{-0.9cm}\\
    \subfloat[\label{fig:cifar-a}PreCrop-Params vs. SynFlow.]{\hspace{.6\linewidth}}
    \subfloat[\label{fig:cifar-b}PreCrop-FLOPs vs. SynFlow.]{\hspace{.35\linewidth}}
\end{subfigure}

\caption{
Comparison of \optparam and \optflop with SynFlow --- \small{we repeat the
experiment using ResNet20 (top), WideResNet20 (middle), and MobileNetV2 (bottom) on CIFAR-10 three times and report the mean and variance (error bar) of the accuracy}.
}
\label{fig:cifar}
\end{figure}

For CIFAR-10, we compare the accuracy of SynFlow (\textcolor{red}{red line}) and two variants of \cp: \optparam (\textcolor{blue}{blue line}) and \optflop (\textcolor{Green}{green line}).
\optparam adds the parameter count constraint whereas \optflop imposes the FLOPs constraint into the convex optimization problem.
As shown in \Figref{fig:cifar-a}, \optparam achieves similar or even better accuracy as SynFlow under a wide range of different model size constraints,
thus validating that \optparam can be as effective as the fine-grained \pai method.
Considering the benefits of structured pruning, \optparam should be favored over existing \pai methods.
\Figref{fig:cifar-b} further shows that \optflop consistently outperforms SynFlow by a large margin, especially when the reduction in FLOPs is large.
The experimental results show that \optflop should be adopted when the performance of the model is limited by the computational cost. 

\begin{table}[ht]
\begin{center}
\begin{small}
\begin{sc}

\vspace{-8pt}
\caption{
\textbf{Comparison of \cp with SynFlow on ImageNet} ---
The dagger($^\dagger$) implies that the numbers are only theoretical without considering the overhead of storing sparse matrices.
} 
\label{tab:imagenet}
\begin{tabular}{llccc}

\toprule
Network&Methods                & FLOPs (G)     & Params (M)     & Accuracy (\%) \\

\midrule 
\multirow{4}*{ResNet34}
&Baseline               &3.64            & 21.80             & 73.5\\

\cmidrule{2-5}
&SynFlow                &2.78$^\dagger$\footnotesize{(76.4\%)} & \textbf{10.91$^\dagger$\footnotesize{(50.0\%)}}    &\textbf{72.1 \footnotesize{(-1.4)}} \\
&\cp                    &\textbf{2.73~\footnotesize{(75.0\%)}} & 11.09 \footnotesize{(50.8\%)}    &71.5~\footnotesize{(-2.0)}\\
\midrule

\multirow{6}*{MobileNetV2}
&Baseline            &0.33          &3.51           &69.6 \\

\cmidrule{2-5}
&SynFlow                &\textbf{0.26$^\dagger$\footnotesize{(78.8\%)}}  &2.44$^\dagger$\footnotesize{(68.6\%)}  &67.6~\footnotesize{(-2.0)} \\
&\cp                    &\textbf{0.26~\footnotesize{(78.8\%)}}  &\textbf{2.33 \footnotesize{(66.4\%)}} &\textbf{68.8~\footnotesize{(-0.8)}} \\

\cmidrule{2-5}
&SynFlow                &\textbf{0.21$^\dagger$\footnotesize{(63.6\%)}}  &1.91$^\dagger$\footnotesize{(54.4\%)}  &64.5~\footnotesize{(-5.1)} \\
&\cp                    &\textbf{0.21~\footnotesize{(63.6\%)}}  &\textbf{1.85 \footnotesize{(52.7\%)}} &\textbf{67.2~\footnotesize{(-2.4)}} \\

\midrule 
\multirow{4}*{EfficientNetB0}
&Baseline               &0.40            &5.29           &73.0 \\

\cmidrule{2-5}
&SynFlow                &\textbf{0.30$^\dagger$\footnotesize{(75.0\%)}}  &3.72$^\dagger$\footnotesize{(70.3\%)}  &71.8~\footnotesize{(-1.2)} \\
&\cp                    &\textbf{0.30~\footnotesize{(75.0\%)}}  &\textbf{3.67 \footnotesize{(69.4\%)}} &\textbf{72.7~\footnotesize{(-0.3)}} \\

\bottomrule
\end{tabular}

\end{sc}
\end{small}
\end{center}
\vspace{-5pt}
\end{table}
Table~\ref{tab:imagenet} summarizes the comparison between \cp and SynFlow on ImageNet.
For ResNet-34, \cp achieves 0.6\% lower accuracy compared to SynFlow with a similar model size and FLOPs.
For both MobileNetV2 and EfficientNetB0, \cp achieves 1.2\% and 0.9\% accuracy improvements compared to SynFlow with strictly fewer FLOPs and parameters, respectively.
The experimental results on ImageNet further support \ourthm that a coarse-grained structured pruning (e.g., \cp) can perform as well as unstructured pruning.
In conclusion, \cp achieves a favorable accuracy and model size/FLOPs tradeoff compared to the state-of-the-art \pai algorithm.

\subsection{Evaluation of \reconf}
As discussed in \Secref{sec:reconf},
\reconf can be viewed as an ultra-fast NAS technique, which adjusts the width of each layer in the model even before the weights are initialized.
\begin{table}[ht]
\begin{center}
\begin{small}
\begin{sc}

\vspace{-8pt}
\caption{
\textbf{\reconf on ImageNet.}
} 
\label{tab:reconf}
\begin{tabular}{llccc}

\toprule
Network &Methods                     & FLOPs (G)     & Params (M)     & Accuracy (\%) \\

\midrule 
\multirow{2}*{ResNet}
&Baseline &\textbf{3.64}            & 21.80             & \textbf{73.5}\\
&\reconf &\textbf{3.64} &\textbf{16.52\footnotesize{(75.8\%)}}   &73.3\footnotesize{(-0.2)}\\

\midrule
\multirow{2}*{MobileNetV2}
&Baseline &0.33          &3.51           &69.6 \\
&\reconf &\textbf{0.32~\footnotesize{(97.0\%)}} &\textbf{2.83\footnotesize{(80.6\%)}}    &\textbf{69.9\footnotesize{(+0.3)}}\\

\midrule
\multirow{2}*{EfficientNetB0}
&Baseline &\textbf{0.40} &5.29           &73.0 \\
&\reconf &\textbf{0.40} &\textbf{4.29\footnotesize{(81.1\%)}}    &\textbf{73.3(+0.3)}\\

\bottomrule
\end{tabular}

\end{sc}
\end{small}
\end{center}
\vspace{-5pt}
\end{table}

Table~\ref{tab:reconf} compares the accuracy of the reconfigured model with the original model under similar model size and FLOPs constraints.
For ResNet34, with similar accuracy, we reduce the parameter count by 25\%.
For MobileNetV2, we achieve $0.3\%$ higher accuracy than the baseline with $20\%$ fewer parameters and $3\%$ fewer FLOPs.
For the EfficientNet, we can also achieve $0.3\%$ higher accuracy than the baseline with only $80\%$ of the parameters and the same FLOPs.
Note that EfficientNet is identified by a NAS method.
As \reconf only changes the number of channels of the model before initialization,
we believe it also applies to other compression techniques.

\section{Conclusion}
In this work, we show theoretically and empirically that only the layer-wise density matters for the accuracy of the CNN models pruned using \pai methods.
We formulate \pai as a simple convex \goal optimization.
Based on \goal optimization, we further propose \cp and \reconf to prune and reconfigure CNNs in the channel dimension.
Our experimental results demonstrate that \cp can outperform existing fine-grained \pai methods on various networks and datasets.

{
\bibliographystyle{ieee_fullname}
\bibliography{egbib}
}

\clearpage
\appendix
\section{Proof of \ourthm}
\label{app:proof}
\begin{theorem}
\label{thm}
Given a specific CNN architecture, the \goal ($\E_{[M, W]} [\mathcal{S}_\text{SF}]$) of any randomly compressed model with the same layer-wise density $p_l$ is a constant, independent of the pruning granularity.
The constant \goal equals to:
\begin{equation}
\mathop{\E}_{M, W}[\mathcal{S}_\text{SF}] 
= N C_{N+1}\prod_{l=1}^N (p_l C_{l} \cdot \E_{x\sim \mathcal{D}}[|x|]) \ ,
\end{equation}
where $N$ is the number of layers in the network,
$\E_{x\sim \mathcal{D}}[|x|]$ is the expectation of magnitude of distribution $\mathcal{D}$,
$C_l$ is the input channel size of layer $l$ and is also the output channel size of $l-1$,
and $p_l=\frac{1}{\alpha_l} \|M_l\|_0$ is the layer-wise density.
\end{theorem}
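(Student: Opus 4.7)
The plan is to compute the expectation in three stages: first, collapse the per-weight SynFlow scores into a single scalar using a homogeneity argument; second, expand that scalar as a sum over paths through the network; third, use independence to factor the expectation and observe that every path contributes the same value. Setting $A^l = |W^l \odot M^l|$ and $R = \mathds{1}^T A^N A^{N-1} \cdots A^1 \mathds{1}$, the key initial observation is that summing \Eqref{eq:synflow} over $i,j$ within layer $l$ collapses the two sandwiched row/column factors back into the full matrix product, giving $\sum_{i,j} \mathcal{S}_\text{SF}(W^l_{ij}) = R$; summing over $l$ then yields the total score $\mathcal{S}_\text{SF} = NR$. Equivalently, $R$ is homogeneous of degree $N$ in the weights and Euler's identity for homogeneous functions delivers the same conclusion. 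The problem thus reduces to evaluating $\E[R]$.

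Second, I would expand $R$ in coordinates as a path sum
$$R \;=\; \sum_{i_0 \in [C_1]}\sum_{i_1 \in [C_2]} \cdots \sum_{i_N \in [C_{N+1}]} \prod_{l=1}^{N} A^l_{i_l,\,i_{l-1}},$$
which contains exactly $\prod_{l=1}^{N+1} C_l = C_{N+1}\prod_{l=1}^N C_l$ terms, each a product of exactly one entry per layer. Independence of $W$ from $M$ and of both across layers lets the expectation of each path factor as $\prod_l \E[|W^l_{i_l, i_{l-1}}|]\,\E[M^l_{i_l, i_{l-1}}]$. Since weights are i.i.d.\ from $\mathcal{D}$ we have $\E[|W^l_{ij}|]=\E_{x\sim\mathcal{D}}[|x|]$, and since the random mask is uniform over binary matrices with $p_l \alpha_l$ ones each entry has marginal probability $p_l$ of being $1$, so $\E[M^l_{ij}] = p_l$. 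Hence every path contributes the same value $\prod_l (p_l\,\E_{x\sim\mathcal{D}}[|x|])$, and multiplying by the path count and then by $N$ produces the claimed formula.

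The main subtlety I expect is the factorization step under pruning patterns of different granularities. Structured patterns (channel or filter pruning) induce strong correlations among mask entries within a layer, so the joint law of $M^l$ is emphatically not product-Bernoulli and one cannot naively write $\E[\prod_l M^l_{i_l,i_{l-1}}]$ as $\prod_l p_l$ at the level of whole paths. The saving grace is that the path expansion touches only one mask entry per layer, so only the per-entry marginal matters, and that marginal equals $p_l$ for any pruning scheme whose symmetry group acts transitively on the entries of $M^l$ while fixing the density. A secondary technicality is lifting the matrix-product picture to convolutional layers: one should view $R$ as the scalar output of the SynFlow forward pass on the all-ones input with absolute weights, verify that it remains multilinear with exactly one weight per layer in each monomial, and check that after summing out spatial/kernel indices the number of monomials still reduces to $C_{N+1}\prod_l C_l$ up to constants that can be absorbed into $\E_{x\sim\mathcal{D}}[|x|]$. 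Once these two points are dispatched, the granularity-invariance claim is automatic, since neither the path count nor the per-path expectation depends on any within-layer correlations of $M^l$ beyond the marginal density $p_l$.
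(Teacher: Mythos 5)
Your proposal is correct and arrives at exactly the paper's formula, but it is organized differently from the paper's own proof, and in a way that buys real clarity. The paper works weight-by-weight and layer-by-layer: it writes the expectation of each individual score $\E[\mathcal{S}_\text{SF}(W^l_{(i,j)})]$ as a product of row/column/layer average quantities $\E^l_{(i,*)},\E^l_{(*,j)},\E^l_{(*,*)}$, argues each equals $p_l\E_{x\sim\mathcal{D}}[|x|]$ by symmetry, sums over $(i,j)$ to get a per-layer contribution $C_{N+1}\prod_{l=1}^N(p_lC_l\E_{x\sim\mathcal{D}}[|x|])$, and only at the end notices that all $N$ layer contributions coincide, which produces the factor $N$. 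You instead collapse first: the observation that $\sum_{i,j}\mathcal{S}_\text{SF}(W^l_{ij})=R=\mathds{1}^T|W^N\odot M^N|\cdots|W^1\odot M^1|\mathds{1}$ for \emph{every} $l$ (equivalently, Euler's identity for the degree-$N$ homogeneous function $R$) makes the total score identically $NR$, so only the single expectation $\E[R]$ needs to be computed, and the factor $N$ is structural rather than a coincidence discovered after the fact. Your path-sum expansion of $R$ then makes rigorous what the paper's nested-average notation leaves loose (the paper's $\E^l$ quantities are actually random empirical averages that are only equal to $p_l\E_{x\sim\mathcal{D}}[|x|]$ in expectation, and its factorization step silently uses cross-layer independence). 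Most importantly, your remark that each monomial in the path expansion touches exactly one mask entry per layer---so that only the per-entry marginal $\E[M^l_{ij}]=p_l$ enters, and this marginal is $p_l$ for any exchangeable pruning scheme, structured or not---isolates precisely why the theorem's granularity-invariance claim is true; the paper never states this crux explicitly. The two treatments of the convolutional extension are comparably brief (the paper absorbs a factor $K^2/C_{l+1}$ per layer, you absorb kernel sums into constants), so your sketch there meets the paper's own standard of rigor.
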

\begin{proof}
Assuming the network has $N$ layers, weight matrix $W^l \in \R^{m_l \times n_l}$,
mask matrix $M^l \in \{0, 1\}^{C_l \times C_{l+1}}$.
$C_l$ and $C_{l+1}$ are the input and output channel size of layer $l$. 
As the output channel size of any layer $l$ equals to the input channel size of the next layer $l+1$, we have $C_{l+1} = C_{l+1}, \forall l<N$. 

We first prove the Theorem \ref{thm} on fully-connected network, and we can extend it to CNNs easily.
From \Eqref{eq:synflow}, in a fully-connected network,
the Synaptic Flow score for any parameter $W^l_{ij}$ with mask $M^l_{ij}$ in layer $l$ equals to:
\begin{align}
\begin{split}
\label{eq:loss}
    \mathcal{S}_{\text{SF}}(W^l_{(i,j)})
    &=\left[\mathds{1}^T\prod_{k=l+1}^{N}\left|W^k\odot M^k\right|\right]_i
    \left|W^l_{(i, j)} M^l_{(i,j)}\right| 
    \left[\prod_{k=1}^{l-1} \left|W^k\odot M^k\right|\mathds{1}\right]_j 
\end{split}
\end{align}

We compute the \goal of the layer $l$ ($\E_{[M, W]}(\mathcal{S}_\text{SF})^{[l]}$),
then the \goal of the network is simply the sum of \goal of all layers:
\begin{equation}
\label{eq:sum}
    \E_{[M, W]}(\mathcal{S}_\text{SF}) = \sum_{l=1}^N \E_{[M, W]}(\mathcal{S}_\text{SF})^{[l]}
\end{equation}
We define the expectation value for input channel $i$, output channel $j$, and the whole layer in layer $l$ as $E^l_{(i, *)}$, $E^l_{(*, j)}$, and $E^l_{(*, *)}$:
\begin{align}
    &\E^l_{(i, *)} = \frac{1}{C_{l+1}} \sum_{x}|W^l_{(i, x)} M^l_{(i ,x)}| \\
    &\E^l_{(*, j)} = \frac{1}{C_l} \sum_{x}|W^l_{(x, j)} M^l_{(x ,j)}| \\
    &\E^l_{(i, j)} =\E^l_{(*, *)} = \frac{1}{C_lC_{l+1}} \sum_{i, j}|W^l_{(i, j)} M^l_{(i ,j)}| 
    = \frac{1}{\alpha_l} \sum_{i, j}|W^l_{(i, j)} M^l_{(i ,j)}|  
    = p_l \E_{|\mathcal{D}^l|} 
\end{align}
Here we use $\E_{|\mathcal{D}^l|}$ to denote $\E_{x\sim \mathcal{D}}[|x|]$.

As the weight in layer $l$ is sampled from distribution $\mathcal{D}$,
and the mask matrices are also randomly sampled,
we have
\begin{equation}
    \E^{[k]}_{(*, *)} 
    = \E^k_{(i, *)} 
    = \E^k_{(*, j)} 
    = p_l \E_{|\mathcal{D}^l|}
\end{equation}

With $E^k_{(i, *)}$, $E^k_{(*, j)}$, and $E^l_{(*, *)}$, we can rewrite \Eqref{eq:loss} to:

\begin{align}
\begin{split}
\label{eq:loss_f}
    \E[\mathcal{S}_{\text{SF}}(W^l_{(i,j)})] &= 
    \left( \prod_{k=l+2}^{N} C_{k+1} \E^k_{(*, *)} \right) \cdot
    C_{l+2}\E^{l+1}_{(i, *)} \cdot
    \E^{l}_{(i, j)} \cdot
    C_{l-1}\E^{l-1}_{(*, j)} \cdot
    \left( \prod_{k=1}^{l-2} C_k E^k_{(*, *)} \right) 
\end{split}
\end{align}

Combining Equation \ref{eq:expectation} and \ref{eq:loss_f},
because the instantiation of the weight matrices and mask matrices for each layer are independent:
\begin{align}
\begin{split}
    &\E_{[M, W]}(\mathcal{S}_\text{SF})^{[l]} = 
    \E \left[ \sum_{i=1}^{C_l}\sum_{j=1}^{C_{l+1}}\mathcal{S}_{\text{SF}}(W^l_{(i,j)}) \right]
    = \sum_{i=1}^{C_l}\sum_{j=1}^{C_{l+1}} \E\left[\mathcal{S}_{\text{SF}}(W^l_{(i,j)}) \right] \\
    & = \left( \prod_{k=l+2}^{N} p_k C_{k+1} \E_{|\mathcal{D}^k|} \right) 
    \sum_{i=1}^{C_l}\sum_{j=1}^{C_{l+1}} \left(
    p_{l+1} C_{l+2}\E^{l+1}_{(i, *)} \cdot
    p_{l} \E_{|\mathcal{D}^l|} \cdot
    p_{l-1}C_{l-1}\E^{l-1}_{(*, j)} \right)
    \left( \prod_{k=1}^{l-2} C_k \E_{|\mathcal{D}^k|} \right)  \\
    & = \left( \prod_{k=l+2}^{N} p_k C_{k+1} \E_{|\mathcal{D}^k|} \right) 
    C_l C_{l+1}\left(
    p_{l+1}C_{l+2}\E_{|\mathcal{D}^{l+1}|} \cdot
    p_{l}\E_{|\mathcal{D}^l|} \cdot
    p_{l-1}C_{l-1}\E_{|\mathcal{D}^{l-1}|} \right)
    \left( \prod_{k=1}^{l-2} p_k C_k \E_{|\mathcal{D}^k|} \right)  \\
    & = C_{N+1}\prod_{l=1}^N (p_lC_{l} \E_{|\mathcal{D}^l|})
\end{split}
\end{align}

According to \Eqref{eq:sum}, 
\begin{align}
\begin{split}
\E_{[M, W]}(\mathcal{S}_\text{SF}) 
&= \sum_{l=1}^N C_{N+1}\prod_{l=1}^N (C_{l} \E_{|\mathcal{D}^l|}) \\
&= N C_{N+1}\prod_{l=1}^N (p_l C_{l} \cdot \E_{x\sim \mathcal{D}}[|x|]),
\end{split}
\end{align}

\ourthm can also be extended to CNNs, 
as it is obvious that \goal of CNNs is proportional to that of fully connected networks.
Thus the difference of \goal between CNNs and fully connected networks for each layer is only a factor equal to $\frac{K^2}{C_{l+1}}$, 
where $K$ is the kernel size of the convolutional layer.
\end{proof}

\section{Solution of the Optimization problem}
For the convex optimization problem in \Eqref{eq:opt_params}, \Eqref{eq:opt_both}, or \reconf,
we can simply use Karush–Kuhn–Tucker (KKT) conditions to analytically solve it.
We include the solutions as follows for completeness.
\label{app:solve}
\subsection{Optimization with Parameter Count Constraint}
\begin{align}
\begin{split}
& p_l = \min(\frac{\mu}{\alpha}, 1) \\
\text{where $\mu$ satisfies: } &\sum_{l=1}^N \min(\alpha_l, \mu) = B_\text{params}
\end{split}
\end{align}

\subsection{Optimization with Parameter Count and FLOPs Constraints}
\begin{align}
\begin{split}
& p_l = \min(\frac{1}{\mu_1\alpha_l + \mu_2\beta_l}, 1)\\
\text{where $\mu_1, \mu_2$ satisfy: }
&\sum_{l=1}^N \alpha_l \min(\frac{1}{\mu_1\alpha_l + \mu_2\beta_l}, 1) = B_\text{params}\\
&\sum_{l=1}^N \beta_l \min(\frac{1}{\mu_1\alpha_l + \mu_2\beta_l}, 1) = B_\text{flops}
\end{split}
\end{align}

\subsection{Formulation of \reconf}
\begin{align}
\begin{split}
& p_l = \frac{1}{\mu_1\alpha_l + \mu_2\beta_l}\\
\text{where $\mu_1, \mu_2$ satisfy: }
&\sum_{l=1}^N \alpha_l \frac{1}{\mu_1\alpha_l + \mu_2\beta_l} = B_\text{params}\\
&\sum_{l=1}^N \beta_l \frac{1}{\mu_1\alpha_l + \mu_2\beta_l} = B_\text{flops}
\end{split}
\end{align}

In practice, to avoid solving the $\mu$, 
we use a convex optimization solver,
which can obtain the solution with a CPU within a second for such a small scale convex optimization.  

\section{More Empirical Results on \ourthm}
We show more empirical results that validates \ourthm.
We first show the comparison of the performance using different pruning granulariteis on VGG16 using CIFAR-10.
All the settings in this experiment is the same as in Figure~\ref{fig:thm_comp}, except this experiment is done on VGG16.
\label{app:val}
\begin{figure}[t]
\centering
\begin{subfigure}[H]{0.5\textwidth}
    \includegraphics[width=\linewidth]{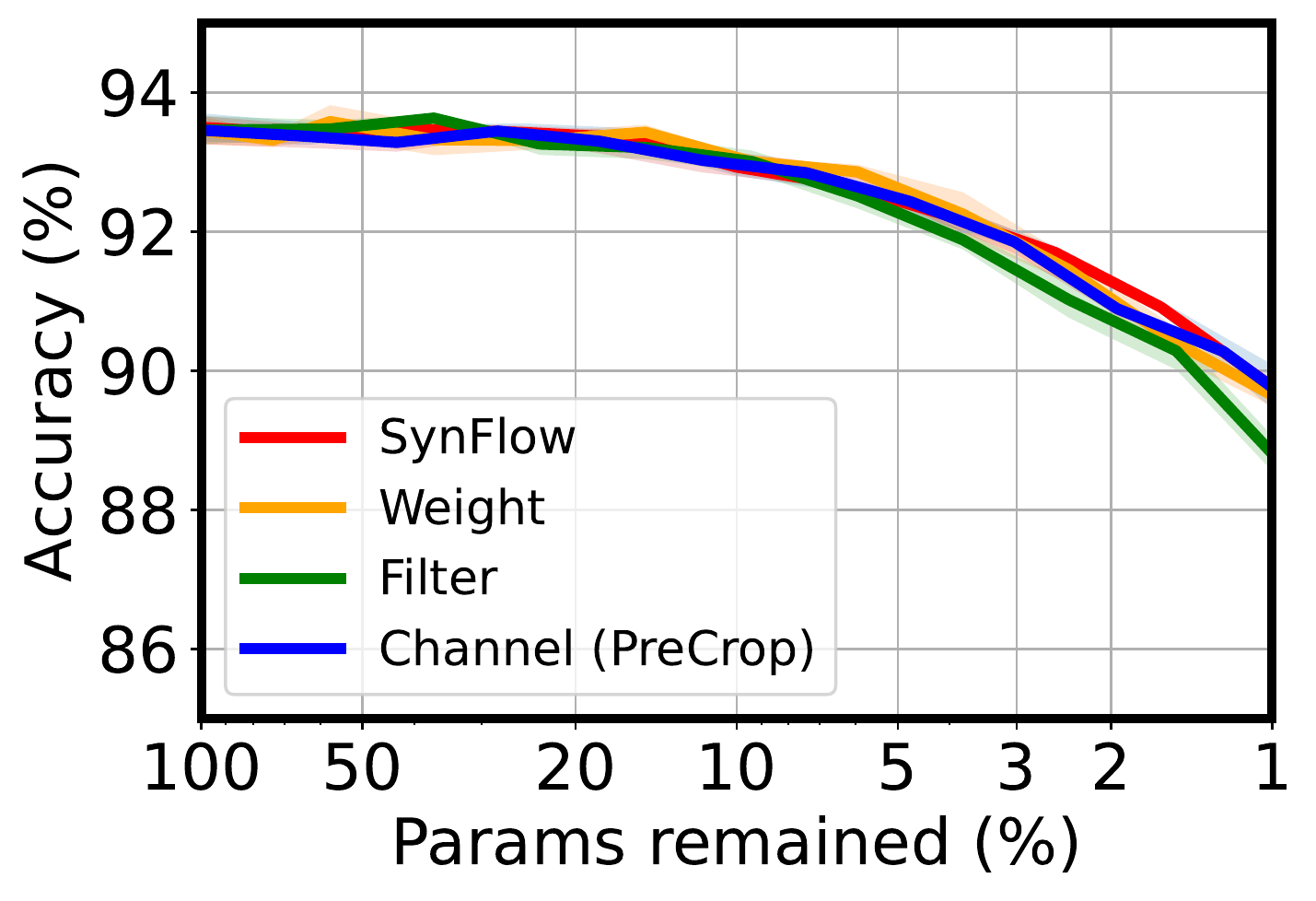}
\end{subfigure}

\caption{
Comparison of the performance using different pruning granularities on VGG16 using CIFAR-10.
}
\label{fig:thm_vgg}
\end{figure}

Then we also verify that \ourthm not only holds for SynFlow, but also holds for other \pai algorithms.
In this experiment, we first use other \pai (i.e., SNIP and GraSP) to obtain the layerwise density $p_l$.
Then we use random pruning to match $p_l$ in the channel level.
The results are shown in Figure~\ref{fig:thm_more}.
\begin{figure}[t]
\centering
\begin{subfigure}[H]{0.49\textwidth}
    \includegraphics[width=\linewidth]{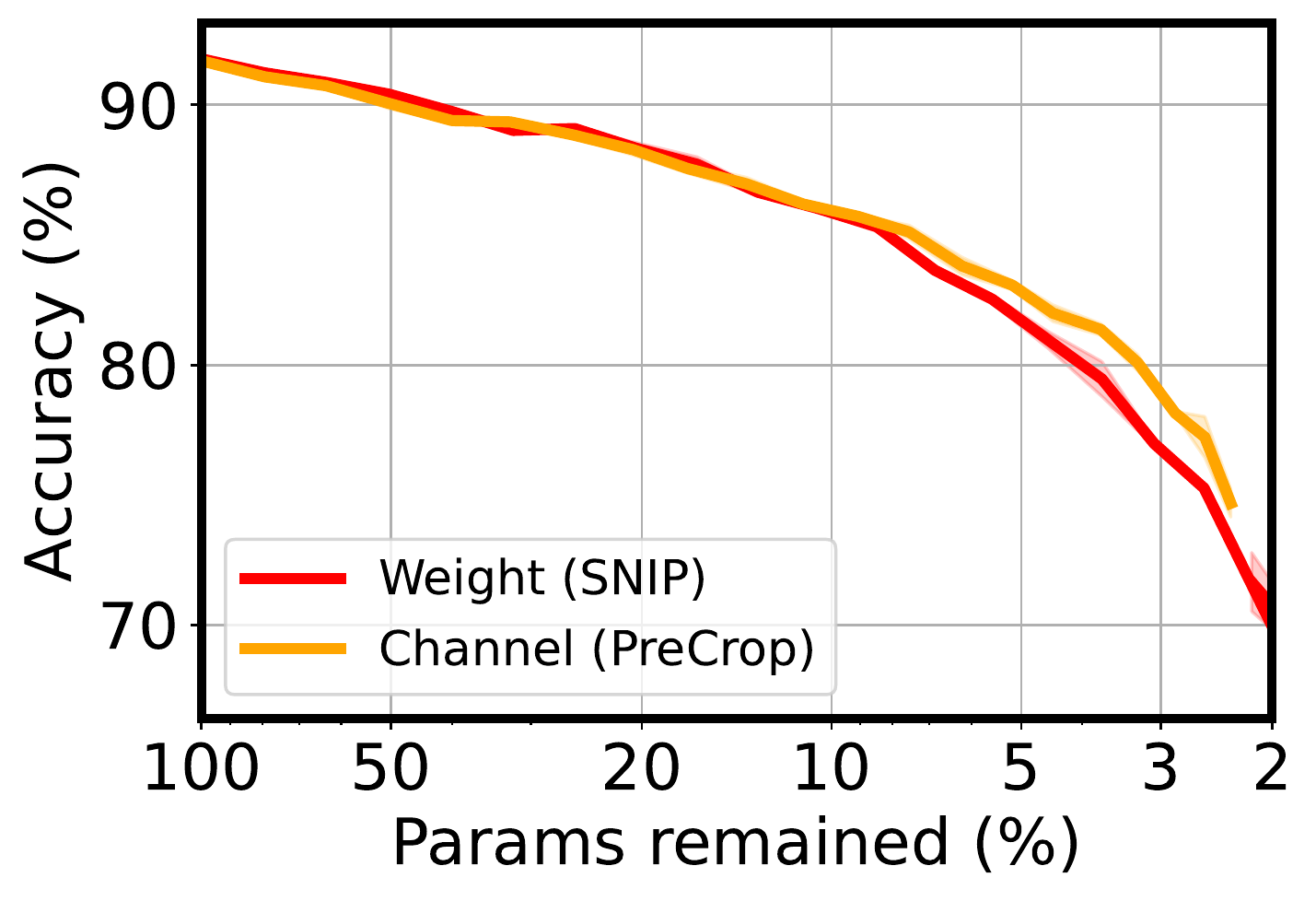}
\end{subfigure}
\begin{subfigure}[H]{0.49\textwidth}
    \includegraphics[width=\linewidth]{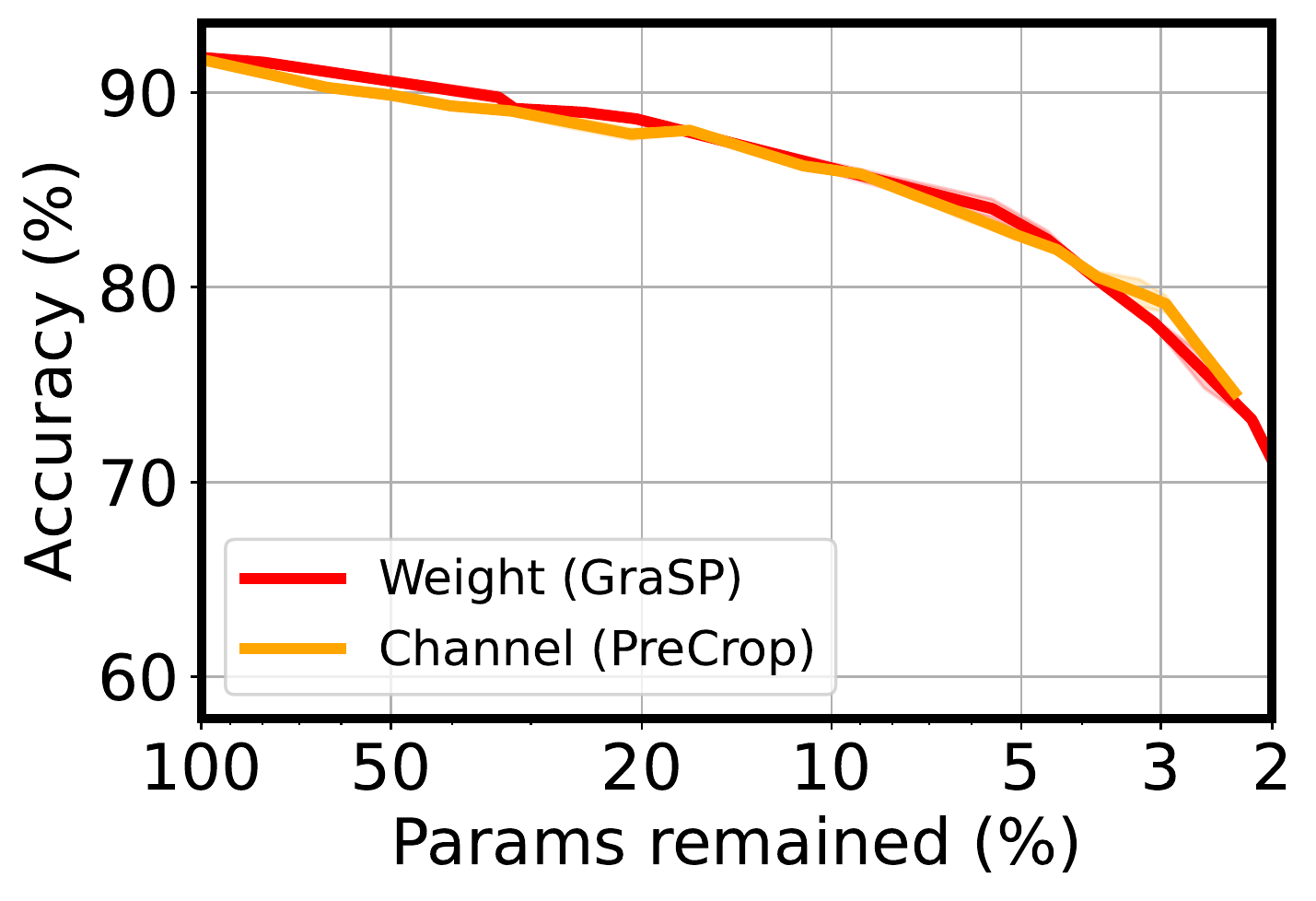}
\end{subfigure}

\caption{
Comparison of the performance using different pruning granularities on ResNet20 using CIFAR-10.
SNIP (left) and GraSP (right) importance scores are used.
}
\label{fig:thm_more}
\end{figure}

As shown in all the above experiments, 
as long as the layerwise density is the same, 
the pruning granularties do not affect the model accuracy.

\section{Channel Width Comparison}
We also include a comparison of the channel width between the baseline EfficientNetB0 and \reconf EfficientNetB0 in Figure~\ref{fig:width}.
\begin{figure}[t]
\centering
\begin{subfigure}[H]{0.5\textwidth}
    \includegraphics[width=\linewidth]{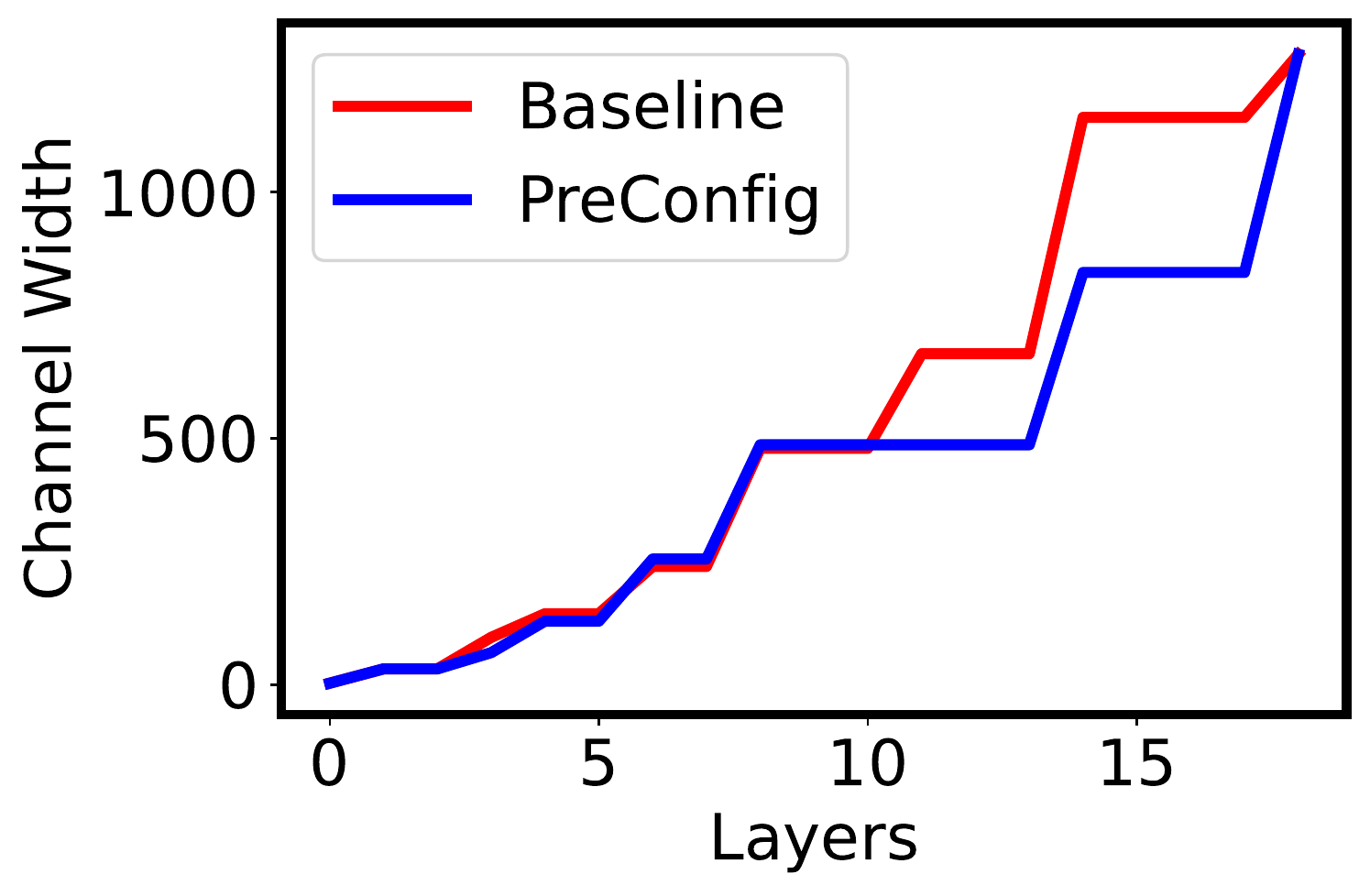}
\end{subfigure}

\caption{
Comparison of the channel width of EfficientNetB0 before and after \reconf.
}
\label{fig:width}
\end{figure}

\section{Experiment Details}
\label{app:exp}
\subsection{Implementation}
We adapt model implementations of ResNet, ShuffleNet, and MobileNetv2 from imgclsmob\footnote{https://github.com/osmr/imgclsmob}. 
The implementations of SynFlow, SNIP, and GraSP are based on the codebase of SynFlow\footnote{https://github.com/ganguli-lab/Synaptic-Flow}.

\subsection{Hyperparameters}
Here we provide the hyperparameters used in training all models in Table~\ref{table:hyper}.
No AutoAugment, Label Smoothing, or stochastic depth is used during training.
All the CIFAR-10 models are trained with same hyperparameter setting.
\begin{table}[ht]
\centering
\caption{Hyperparameters used in training.}
\vspace{10pt}
\label{table:hyper}
\small
\begin{tabular}{rrrrrrr}
\toprule
&  \multirow{2.5}*{CIFAR-10}         &\multicolumn{3}{c}{ImageNet}  \\
\cmidrule{3-5}
&                                           & MobileNet         & ResNet                & EfficientNet\\
\midrule        
Optimizer               & momentum          & momentum          & momentum              & momentum\\ 
Training Epochs         & 160               & 180               & 90                    & 150\\ 
Batch Size              & 128               & 256               & 512                   & 256\\ 
Initial Learning Rate   & 0.1               & 0.025             & 0.2                   & 0.035\\ 
Learning Rate Schedule  & linear            & drop at each epoch& drop at 30, 60 epoch  & drop at each epoch\\ 
Drop Rate               & N.A.              & 0.98              & 0.1                   & 0.99\\ 
Weight Decay            & $10^{-4}$         & $4\times 10^{-5}$ & $10^{-4}$             & $4 \times 10^{-5}$\\ 
\bottomrule
\end{tabular}
\end{table}

\end{document}